\documentclass[mlmain]{jmlr}

\usepackage{booktabs}
\usepackage{tikz}
\usetikzlibrary{arrows.meta,positioning}

\jmlrvolume{}
\firstpageno{1}
\jmlryear{2026}
\jmlrworkshop{Symmetry and Geometry in Neural Representations}
\jmlrpmlr
\jmlrauthors{Ridham Patel}

\newcommand{\R}{\mathbb{R}}
\newcommand{\SO}{\mathrm{SO}(3)}
\newcommand{\IKNO}{\textsc{IKNO}}

\title[Isotropic Kernels for Neural Dynamics]{Exact $\mathrm{SO}(3)$-Equivariant Isotropic Kernels for Rotation-Robust Neural Dynamics}

\author{
  \Name{Ridham Patel} \Email{ridham.patel@iitgn.ac.in}\\
  \addr Indian Institute of Technology Gandhinagar, Gandhinagar, India
}

\begin{document}

\maketitle

\begin{abstract}
Neural surrogates for vector-valued partial differential equations can fit
training data yet change their predictions when the same physical state is
expressed in a rotated coordinate frame. We study this failure on
three-dimensional Navier--Stokes dynamics observed at irregularly placed
points. We introduce the Invariant-Conditioned Isotropic Kernel Neural Operator
(\IKNO), a compact graph model that builds local interactions from scalar
quantities unchanged by rotation and vector directions that rotate with the
data. Consequently, rotating the positions and velocities rotates the
predicted velocity change in exactly the same way. On a held-out test set fixed
after model design, training unconstrained graph models on randomly rotated
examples reduces but does not eliminate their coordinate dependence. In
contrast, \IKNO{} is consistent to numerical precision, matches the forecasting accuracy of a
general rotation-aware Tensor Field Network with $5.6$ times fewer parameters,
and outperforms a parameter-matched graph simulator. These results show that a
compact, PDE-specialized model can remove coordinate dependence without
sacrificing forecasting accuracy.
\end{abstract}

\begin{keywords}
equivariant neural networks, neural PDE solvers, irregular meshes,
Navier--Stokes, geometric deep learning
\end{keywords}

\section{Introduction}
\label{sec:introduction}

The same physical state can be expressed in many coordinate frames. A quantity
is invariant if a transformation leaves it unchanged; a map is
equivariant if its output transforms with its input. For a
three-dimensional velocity field, rotation equivariance requires a joint
rotation of coordinates and velocities to rotate the predicted velocity change
by the same amount. The vector prediction changes, but the physical rule does
not depend on the axes. We study proper three-dimensional rotations, denoted
$\mathrm{SO}(3)$, which exclude reflections. Without this constraint, low error
in the training frame does not determine behavior in a new frame
\citep{villar2021scalars,helwig2023gfno}.

This issue is especially relevant for learned partial differential equation
(PDE) solvers on irregular meshes. Graph models are usually
permutation-equivariant: reordering sampled points merely reorders their
outputs. This lets graph simulators accommodate changing discretizations and
supports accurate long-horizon prediction
\citep{sanchez2020simulate,pfaff2021mesh}. However,
permutation equivariance concerns point labels, not spatial orientation, and
does not specify how a vector field transforms under rotation. Treating
Cartesian components as ordinary feature channels can therefore make the
learned dynamics depend on the axes
\citep{bronstein2021geometric}.

Rotation-equivariant architectures remove this dependence by prescribing how
features transform. Tensor-field and steerable networks propagate geometric
quantities according to their rotation rules
\citep{thomas2018tensor,fuchs2020se3}, while lighter-weight
constructions use vector-valued features or invariant scalar weights
\citep{satorras2021en,villar2021scalars}. We ask whether such an exact constraint improves forecasting
under a coordinate-frame shift without reducing accuracy in the observed
frame. We compare it with rotation augmentation, which trains a model on
randomly rotated copies of the observed states
\citep{brandstetter2022lie,puny2022frame}.

We use a controlled one-frame prediction experiment on three-dimensional
incompressible Navier--Stokes flow. The model maps a velocity field sampled at
irregular points to a one-step derivative estimate. Training and validation
use orientation-biased initial conditions; testing uses both the original
states and jointly rotated coordinates, velocities, and targets. Because the
rotated target is known exactly by changing coordinates, additional
disagreement isolates frame dependence without a second simulation.

To perform this test, we introduce the Invariant-Conditioned Isotropic Kernel
Neural Operator (\IKNO). Each edge coefficient is produced from scalar
rotation invariants. The resulting coefficients weight four vector bases that
rotate with the data. The construction has three properties relevant to the
experiment: it is exactly $\SO$-equivariant, it annihilates constant fields in
its difference channels, and it contains only $4{,}869$ trainable parameters.

The main result is that architectural equivariance and rotation augmentation
produce measurably different operators. Augmentation reduces the rotation
sensitivity of two graph controls but does not eliminate it; \IKNO{} remains
unchanged across 1,000 sampled coordinate frames. A Tensor Field Network
provides the complementary exact-equivariance control: it matches \IKNO{} in
forecasting accuracy but uses $5.6$ times more parameters. A parameter-matched
graph simulator is substantially less accurate. The evidence therefore
supports a specific conclusion: the specialized isotropic construction
provides exact rotation robustness at low parameter cost.

\section{Related Work}
\label{sec:related}

Equivariant representation learning formalizes weight sharing over transformations
\citep{cohen2016group,bronstein2021geometric}. Tensor Field Networks and
$\mathrm{SE}(3)$-Transformers use typed features that follow irreducible group
representations and spherical harmonics, a basis for angular dependence
\citep{thomas2018tensor,fuchs2020se3}; e3nn implements their tensor
products \citep{geiger2022e3nn}. Lighter Cartesian constructions include EGNN,
Vector Neurons, PaiNN, and SEGNN
\citep{satorras2021en,deng2021vector,schutt2021painn,brandstetter2022segnn}.
Classical isotropic representation theory and its modern scalar formulation
express equivariant vector functions through invariant coefficients and
equivariant generators \citep{smith1971isotropic,villar2021scalars}. The latter
also distinguishes proper rotations from reflections through cross-product
generators. \IKNO{} selects a compact, PDE-oriented subset of admissible
generators on an irregular mesh; it does not claim a complete parameterization.

Learned PDE models take several complementary architectural forms. Neural
operators learn maps between function spaces and can amortize repeated
PDE solution \citep{li2021fno}. Graph simulators instead represent local
physical interactions by message passing, which naturally supports particles
and irregular meshes \citep{sanchez2020simulate,pfaff2021mesh}. Message-passing
PDE solvers connect these learned updates to classical local discretizations
\citep{brandstetter2022message}. Symmetry has been incorporated through data
augmentation derived from Lie point symmetries \citep{brandstetter2022lie},
steerable differential operators \citep{jenner2022steerable}, and group
equivariant Fourier layers \citep{helwig2023gfno}. LieConv treats arbitrary
continuous data \citep{finzi2020lieconv}, while Neural ePDOs learn adaptive
equivariant differential operators \citep{he2023epdo}. In contrast to Fourier
methods on regular grids, \IKNO{} acts directly on irregular three-dimensional
samples. Our experiment also measures operator equivariance separately from
forecasting error, which distinguishes exact architectural consistency from
good performance on one orientation distribution.

\section{Problem Setting}
\label{sec:problem}

We first define the evolution problem and the symmetry to be tested. Let
$\{\boldsymbol{x}_i\}_{i=1}^N\subset\R^3$ be sample coordinates and let
$\boldsymbol{u}_i(t)\in\R^3$ denote velocity. A learned local operator
$f_\theta$ predicts the derivative
\begin{equation}
  \dot{\boldsymbol{u}}_i(t)
  = f_{\theta,i}(\boldsymbol{U}(t),\boldsymbol{X}),
  \qquad
  \boldsymbol{U}=(\boldsymbol{u}_1,\ldots,\boldsymbol{u}_N).
  \label{eq:dynamics}
\end{equation}
We integrate this derivative with an explicit Euler step of duration
$\Delta t$ to obtain autoregressive forecasts.

The relevant transformation rotates coordinates and vector components
together. For $R\in\SO$, define
$R\boldsymbol{X}=\{R\boldsymbol{x}_i\}_i$ and
$R\boldsymbol{U}=\{R\boldsymbol{u}_i\}_i$. The operator is
$\SO$-equivariant when
\begin{equation}
 f_\theta(R\boldsymbol{U},R\boldsymbol{X})
 =R f_\theta(\boldsymbol{U},\boldsymbol{X})
 \quad\text{for every }R\in\SO.
 \label{eq:equivariance}
\end{equation}
Equation~\ref{eq:equivariance} is stronger than low rotated prediction error.
The equation is an operator identity; rotated prediction error also depends on the difficulty
of the forecasting task. We therefore report both quantities.

Our physical system is the incompressible Navier--Stokes equation
\begin{align}
 \partial_t\boldsymbol{u}
 +(\boldsymbol{u}\cdot\nabla)\boldsymbol{u}
 &=-\nabla p+\nu\nabla^2\boldsymbol{u},
 & \nabla\cdot\boldsymbol{u}&=0,
 \label{eq:ns}
\end{align}
on a periodic three-dimensional domain. The differential terms in
Equation~\ref{eq:ns} are coordinate-covariant. Consequently, jointly rotating
coordinates and vector values gives a valid representation of the same local
dynamics, even when the observed initial-condition distribution favors one
orientation \citep{olver1993applications,brandstetter2022lie}.

\section{Invariant-Conditioned Isotropic Kernel}
\label{sec:method}

The method must use local directional information without tying a prediction
to a global axis. Motivated by isotropic representation results, we separate
each message into invariant coefficients and equivariant vector generators
\citep{smith1971isotropic,villar2021scalars}.

\begin{figure}[t]
\centering
\resizebox{\linewidth}{!}{%
\begin{tikzpicture}[
  >=Latex,
  box/.style={draw,rounded corners=1.5pt,align=center,inner sep=4pt,font=\small},
  inv/.style={box,fill=blue!6},
  vec/.style={box,fill=orange!8},
  flow/.style={-Latex,semithick},
  node distance=5mm and 7mm
]
\node[box] (edge) {local edge\\$\widehat{\boldsymbol r}_{ij},\boldsymbol u_i,
  \delta\boldsymbol u_{ij}$};
\node[inv,above right=4mm and 10mm of edge] (stats) {rotation invariants\\
  $\boldsymbol s_{ij}\in\mathbb R^6$};
\node[inv,right=of stats] (coeff) {scalar MLP $\phi_\theta$\\
  $a_{ij},b_{ij},c_{ij},d_{ij}$};
\node[vec,below right=4mm and 10mm of edge] (basis) {equivariant vector bases\\
  $\delta\boldsymbol u,\ \widehat{\boldsymbol r}(\widehat{\boldsymbol r}^{\!\top}\delta\boldsymbol u),$
  $\ \boldsymbol u_i,\ \boldsymbol u_i\!\times\!\widehat{\boldsymbol r}$};
\node[box,right=10mm of basis] (mix) {scalar--vector\\weighted sum $\boldsymbol m_{ij}$};
\node[box,right=9mm of mix] (out) {mean over $j$ and $+\lambda\boldsymbol u_i$\\
  $f_{\theta,i}(\boldsymbol U,\boldsymbol X)$};
\draw[flow] (edge) -- node[above left,font=\scriptsize] {norms, dot products} (stats);
\draw[flow] (edge) -- node[below left,font=\scriptsize] {typed vectors} (basis);
\draw[flow] (stats) -- (coeff);
\draw[flow] (coeff.east) -| (mix.north);
\draw[flow] (basis) -- (mix);
\draw[flow] (mix) -- (out);
\end{tikzpicture}%
}
\caption{\IKNO{} message construction. Invariant coefficients weight
equivariant vector bases; aggregation preserves their common rotation and
therefore yields an equivariant output.}
\label{fig:method}
\end{figure}

\subsection{Invariant edge description}

For a directed edge from node $i$ to node $j$, define the relative coordinate,
unit direction, and velocity difference
\begin{equation}
 \boldsymbol{r}_{ij}=\boldsymbol{x}_j-\boldsymbol{x}_i,
 \qquad
 \widehat{\boldsymbol{r}}_{ij}
 =\frac{\boldsymbol{r}_{ij}}{\|\boldsymbol{r}_{ij}\|_2},
 \qquad
 \delta\boldsymbol{u}_{ij}=\boldsymbol{u}_j-\boldsymbol{u}_i.
 \label{eq:edge}
\end{equation}
The edge graph connects points within radius $r_{\max}$ and retains at most
$K$ neighbors. Because this rule depends only on distance, a rotation does not
change the graph except for floating-point tie cases.

The coefficient network receives six scalar invariants:
\begin{equation}
 \boldsymbol{s}_{ij}=\big[
 \|\boldsymbol{r}_{ij}\|_2,
 \|\boldsymbol{u}_i\|_2,
 \|\delta\boldsymbol{u}_{ij}\|_2,
 \boldsymbol{u}_i^\top\widehat{\boldsymbol{r}}_{ij},
 \delta\boldsymbol{u}_{ij}^\top\widehat{\boldsymbol{r}}_{ij},
 \boldsymbol{u}_i^\top\delta\boldsymbol{u}_{ij}
 \big].
 \label{eq:invariants}
\end{equation}
Apart from radial scale $\|\boldsymbol r_{ij}\|_2$ and the fixed norm
$\|\widehat{\boldsymbol r}_{ij}\|_2=1$, these quantities encode the nonconstant
Gram information of
$(\widehat{\boldsymbol r}_{ij},\boldsymbol u_i,\delta\boldsymbol u_{ij})$.
Inner products generate rotation-invariant scalar descriptions of vector
inputs \citep{villar2021scalars}. A multilayer perceptron
$\phi_\theta:\R^6\rightarrow\R^4$ maps them to
$(a_{ij},b_{ij},c_{ij},d_{ij})$.

\subsection{Equivariant message and update}

Guided by the same representation results, we choose a compact PDE-oriented
subset of admissible $\SO$-equivariant generators rather than a universal
basis \citep{smith1971isotropic,villar2021scalars}:
\begin{align}
 \boldsymbol{m}_{ij}
 ={}&a_{ij}\delta\boldsymbol{u}_{ij}
 +b_{ij}\widehat{\boldsymbol{r}}_{ij}
       (\widehat{\boldsymbol{r}}_{ij}^{\top}\delta\boldsymbol{u}_{ij})
 +c_{ij}\boldsymbol{u}_i
 +d_{ij}(\boldsymbol{u}_i\times\widehat{\boldsymbol{r}}_{ij}).
 \label{eq:message}
\end{align}
The first two terms span direct and longitudinal difference responses. The
third permits state-dependent local dynamics. The cross product is
orientation-sensitive but rotation-consistent; as a pseudovector under
reflections, it distinguishes $\SO$ from $\mathrm O(3)$
\citep{villar2021scalars}.

Messages are mean-aggregated and combined with a scalar self-interaction:
\begin{equation}
 f_{\theta,i}(\boldsymbol{U},\boldsymbol{X})
 =\lambda\boldsymbol{u}_i
 +\frac{1}{|\mathcal{N}(i)|}
  \sum_{j\in\mathcal{N}(i)}\boldsymbol{m}_{ij}.
 \label{eq:update}
\end{equation}
The difference channels vanish on constant fields, matching the nullspace of
local differential responses \citep{brandstetter2022message,jenner2022steerable}.
The coefficient network has two hidden layers of 64 SiLU units. For optimization
stability, its final layer and $\lambda$ are initialized to zero, so training
begins from a stationary derivative.

\begin{proposition}
If $\mathcal{N}(i)$ depends only on pairwise distances, then for every
$R\in\SO$ and $\boldsymbol a\in\R^3$,
$f_\theta(R\boldsymbol U,R\boldsymbol X+\boldsymbol a)
=R f_\theta(\boldsymbol U,\boldsymbol X)$.
\end{proposition}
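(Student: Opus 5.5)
The plan is to follow the data through the three stages of the construction: the graph, then the invariants and coefficients, and finally the messages and the update. At each stage we track how the quantities change under the joint map $(\boldsymbol U,\boldsymbol X)\mapsto(R\boldsymbol U,R\boldsymbol X+\boldsymbol a)$. Write primes for quantities computed from the transformed input.

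\emph{Graph.} We have $\boldsymbol x_j'-\boldsymbol x_i'=R\boldsymbol x_j+\boldsymbol a-R\boldsymbol x_i-\boldsymbol a=R\boldsymbol r_{ij}$, so $\boldsymbol r_{ij}'=R\boldsymbol r_{ij}$. Because $R$ is orthogonal, $\|\boldsymbol r_{ij}'\|_2=\|\boldsymbol r_{ij}\|_2$. By hypothesis $\mathcal N(i)$ depends only on pairwise distances, so $\mathcal N'(i)=\mathcal N(i)$, and $|\mathcal N(i)|$ is unchanged. This is the only place the hypothesis is used. It is also where the translation $\boldsymbol a$ disappears, since every later quantity is built from the differences $\boldsymbol r_{ij}$ rather than from absolute positions.

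\emph{Edge vectors and invariants.} The edge vectors transform as $\widehat{\boldsymbol r}_{ij}'=R\widehat{\boldsymbol r}_{ij}$, $\boldsymbol u_i'=R\boldsymbol u_i$, and $\delta\boldsymbol u_{ij}'=R\,\delta\boldsymbol u_{ij}$. Each entry of $\boldsymbol s_{ij}$ in Equation~\ref{eq:invariants} is either a norm or an inner product of these vectors. Since $R^\top R=I$, we get $(R\boldsymbol v)^\top(R\boldsymbol w)=\boldsymbol v^\top\boldsymbol w$, and hence $\boldsymbol s_{ij}'=\boldsymbol s_{ij}$. As $\phi_\theta$ is a fixed deterministic map, the coefficients are unchanged: $(a,b,c,d)_{ij}'=(a,b,c,d)_{ij}$.

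\emph{Messages and update.} We check each generator in Equation~\ref{eq:message} separately.
\begin{itemize}
\item The terms $\delta\boldsymbol u_{ij}$ and $\boldsymbol u_i$ are multiplied by $R$ directly.
\item For the longitudinal term, $R\widehat{\boldsymbol r}\,((R\widehat{\boldsymbol r})^\top R\,\delta\boldsymbol u)=R\,\widehat{\boldsymbol r}(\widehat{\boldsymbol r}^\top\delta\boldsymbol u)$.
\item For the cross product we use the identity $(R\boldsymbol v)\times(R\boldsymbol w)=(\det R)\,R(\boldsymbol v\times\boldsymbol w)$. Since $\det R=1$ for $R\in\SO$, this gives $R(\boldsymbol u_i\times\widehat{\boldsymbol r}_{ij})$.
\end{itemize}

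I expect the cross-product identity to be the only step that needs real justification; the other steps are one-line orthogonality checks. To prove it, I would take an arbitrary test vector $\boldsymbol z$ and compute $\boldsymbol z^\top\big((R\boldsymbol v)\times(R\boldsymbol w)\big)=\det[\boldsymbol z,R\boldsymbol v,R\boldsymbol w]$. Writing $\boldsymbol z=R(R^\top\boldsymbol z)$, this equals $\det R\cdot\det[R^\top\boldsymbol z,\boldsymbol v,\boldsymbol w]$, which is $(\det R)\,(R^\top\boldsymbol z)^\top(\boldsymbol v\times\boldsymbol w)=(\det R)\,\boldsymbol z^\top R(\boldsymbol v\times\boldsymbol w)$. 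Since $\boldsymbol z$ is arbitrary, the identity follows. This step is also where the restriction to $\SO$ instead of $\mathrm O(3)$ is essential.

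Combining the stages, the coefficients are unchanged and every generator picks up a factor of $R$, so $\boldsymbol m_{ij}'=R\boldsymbol m_{ij}$. In Equation~\ref{eq:update}, the neighbourhood and its size are unchanged, $\lambda$ is a fixed scalar, and $\boldsymbol u_i'=R\boldsymbol u_i$. Since the map $\boldsymbol v\mapsto R\boldsymbol v$ is linear, it commutes with the mean and with the sum, so $f_{\theta,i}(R\boldsymbol U,R\boldsymbol X+\boldsymbol a)=Rf_{\theta,i}(\boldsymbol U,\boldsymbol X)$ for every $i$.

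The $K$-neighbour truncation and the floating-point ties mentioned earlier are covered by the hypothesis that $\mathcal N(i)$ depends only on distances, so they need no separate treatment.
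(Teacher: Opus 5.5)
Your proof is correct and follows the same route as the paper's proof. The invariants, and hence the coefficients, are unchanged; each generator in Equation~\ref{eq:message} picks up a factor of $R$ by orthogonality and the $\SO$ cross-product identity; distance-based neighbourhoods are fixed; linear aggregation commutes with $R$; and the translation cancels in $\boldsymbol r_{ij}$. Your determinant argument for $(R\boldsymbol v)\times(R\boldsymbol w)=(\det R)\,R(\boldsymbol v\times\boldsymbol w)$ adds only a justification of an identity that the paper takes as known.
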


\begin{proof}
Under a joint rotation, each entry of $\boldsymbol{s}_{ij}$ is unchanged, so
the four scalar coefficients are unchanged. Each vector basis in
Equation~\ref{eq:message} is multiplied by $R$: this follows from preservation
of inner products and from
$(R\boldsymbol{a})\times(R\boldsymbol{b})=R(\boldsymbol{a}\times\boldsymbol{b})$
for $R\in\SO$. Distance-based neighborhoods are unchanged, and summation and
scalar self-interaction commute with $R$. Therefore each output is multiplied
by $R$. A global translation cancels in every
$\boldsymbol r_{ij}=\boldsymbol x_j-\boldsymbol x_i$, proving the translation
claim.
\end{proof}

\section{Experimental Design}
\label{sec:experiments}

The experiments test one question: does exact operator equivariance improve
prediction under an unseen coordinate frame while preserving useful accuracy
in the observed frame? We first describe the data and controls, then define
metrics that distinguish forecasting accuracy from operator consistency.

\subsection{Data and split protocol}

We generate three-dimensional incompressible flow on $[0,2\pi]^3$ using a
$16^3$ pseudo-spectral solver with two-thirds dealiasing, Leray projection, and
Strang-split diffusion \citep{canuto1988spectral,strang1968construction}. A
fixed-axis large-scale forcing is used for 50 warm-up
steps to create an orientation-biased initial state. Each trajectory adds an
independent $10\%$ divergence-free perturbation, chosen to diversify
trajectories while retaining the warm-up orientation bias, and is then evolved
without forcing or observation noise. This separation is deliberate: the forecast
dynamics obey rotational covariance, while the observed initial-condition
distribution supplies only a restricted set of orientations.

Each trajectory contains 20 snapshots separated by $\Delta t=0.02$. The
spectral velocity is trilinearly interpolated to a separately sampled irregular
cloud of 500 points in normalized coordinates. Training uses random 128-point
subsamples. The supervised derivative target is
$(\boldsymbol{U}(t+\Delta t)-\boldsymbol{U}(t))/\Delta t$. Evaluation uses a
fixed 128-point subsample per trajectory.

The development data contain 25 trajectories with seeds 342--366. The first
20 form the model-development pool: a deterministic split assigns 16 to
optimization and four to validation, and validation error alone selects
checkpoints. Five additional trajectories with seeds 362--366 screen which
new controls warrant final evaluation; no screening result appears in the
paper. The proposed architecture was frozen before generating the final test
set. The added controls were specified and screened only on development data,
then fixed before their single final evaluation. The reported test set contains
five trajectories with seeds 367--371 and new point clouds.

\subsection{Controls and optimization}

We compare with persistence and three learned architecture families.
Persistence copies
the initial field and establishes whether a learned derivative improves over
zero dynamics. Graph Network-based Simulator (GNS) uses six message-passing
layers following \citet{sanchez2020simulate}. The second control adapts the
EGNN implementation of \citet{satorras2021en} by passing the three velocity
components as node-feature channels; we call it component-EGNN. This is
an unconstrained vector-field control, not an equivariant implementation of
typed vector features: the original EGNN guarantee applies to coordinate
updates with invariant node features.

To distinguish the proposed specialization from general equivariant message
passing, we implement a Tensor Field Network (TFN)
\citep{thomas2018tensor} with e3nn \citep{geiger2022e3nn}. It carries even
scalars and odd vectors, and applies distance-conditioned tensor products with
spherical harmonics through degree one. The TFN is $\mathrm{O}(3)$-equivariant,
and therefore also $\SO$-equivariant. Unlike \IKNO{}, it propagates learned
scalar and vector hidden channels through tensor products.

We introduce two controls for possible confounds. First, GNS and
component-EGNN are retrained with a fresh Haar-random joint rotation of
coordinates, velocities, and derivative targets for every training sample. We
sample these rotations by Gaussian QR factorization
\citep{mezzadri2007random}. These variants test approximate symmetry learned
through augmentation \citep{brandstetter2022lie,puny2022frame}. Second,
a six-layer GNS with hidden width nine contains 5,352 parameters and controls
for the capacity difference between the original GNS and \IKNO{}.

All learned methods use the same radius $r_{\max}=0.4$, at most 64 neighbors,
the same trajectory split, and the same temporal-window sampling protocol.
\IKNO{} uses
4,869 parameters, TFN 27,297, component-EGNN 30,058, and GNS 63,174. We train
three seeds for 100 epochs with Adam and cosine learning-rate decay
\citep{kingma2015adam}, using
validation-selected checkpoints. The initial learning rate is
$3\times10^{-3}$ for \IKNO{} and $2\times10^{-3}$ for the controls. Additional
implementation details and the two-frame extrapolation control are reported in
Appendix~\ref{app:details}.

\subsection{Metrics and rotation intervention}

Forecasting error at horizon $h$ is the relative Frobenius error
\begin{equation}
 e_h=\frac{\|\widehat{\boldsymbol{U}}_h-\boldsymbol{U}_h\|_F}
 {\|\boldsymbol{U}_h\|_F}.
 \label{eq:predictionmetric}
\end{equation}
We report $h=1$ and $h=10$. To measure coordinate-frame generalization, we draw
1,000 fixed Haar-random rotations for one-step prediction and use the first 20
for ten-step rollouts, concentrating the denser stress test on the cheaper
one-step evaluation. The same rotations are applied to every method,
coordinate set, input velocity, and target velocity. The resulting rotated
error tests the same physical state in an unseen frame.

Average rotated error can conceal a sensitive coordinate frame. We therefore
also measure one-step degradation relative to the original frame. Here
$e_1(\boldsymbol X,\boldsymbol U)$ denotes the one-step error evaluated on the
specified coordinates and input velocity:
\begin{equation}
 \Delta_1(R)=e_1(R\boldsymbol{X},R\boldsymbol{U})
             -e_1(\boldsymbol{X},\boldsymbol{U}),
 \label{eq:degradation}
\end{equation}
and report its mean, 95th percentile, and sampled maximum over the 1,000
rotations. The maximum is a finite-sample stress test, not a certified
worst-case bound.

Forecasting error can remain small even when the learned derivative is not
equivariant. We therefore define our normalized operator equivariance diagnostic
\begin{equation}
 e_{\mathrm{eq}}=
 \mathbb{E}_{R}\left[
 \frac{\operatorname{RMS}\!\left(f_\theta(R\boldsymbol{U},R\boldsymbol{X})
 -Rf_\theta(\boldsymbol{U},\boldsymbol{X})\right)}
 {\operatorname{RMS}\!\left(f_\theta(R\boldsymbol{U},R\boldsymbol{X})\right)
 +\operatorname{RMS}\!\left(Rf_\theta(\boldsymbol{U},\boldsymbol{X})\right)}
 \right].
 \label{eq:eqmetric}
\end{equation}
A value near zero means that the learned derivative is coordinate-consistent:
rotating the input rotates the derivative by the same amount. We average each test metric over five trajectories
within a run and report mean $\pm$ sample standard deviation over three
training seeds.

\section{Results}
\label{sec:results}

\subsection{Main result: architecture and augmentation produce different symmetry}

The main experiment asks whether rotation augmentation recovers the same
operator-level consistency as an equivariant architecture. It does not.
Figure~\ref{fig:main}a shows normalized equivariance errors of $0.163$ for GNS
and $0.238$ for component-EGNN. Augmentation reduces these errors by $45\%$ and
$43\%$, respectively, but leaves residual errors of $0.090$ and $0.136$. In
contrast, \IKNO{} reaches $5.60\times10^{-8}$ and TFN reaches
$2.59\times10^{-7}$. These values match the numerical precision observed for
the theoretically equivariant implementations (Appendix~\ref{app:details}).

\begin{figure}[t]
  \centering
  \includegraphics[width=0.97\textwidth]{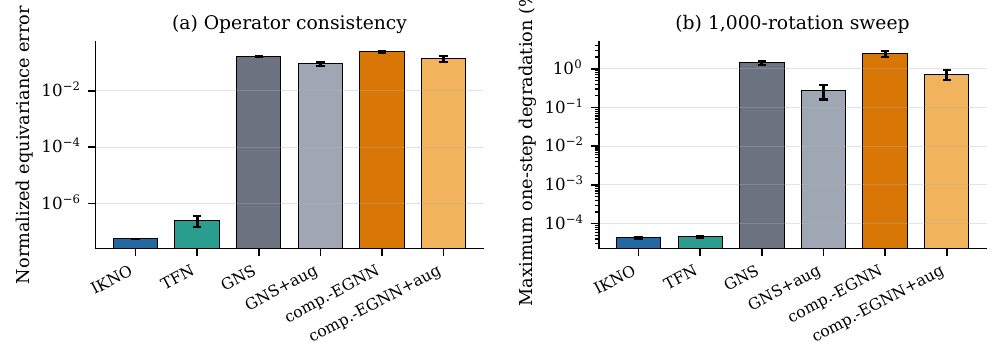}
  \caption{Architectural equivariance differs from rotation
  augmentation.
  Bars and error bars show mean and sample standard deviation over three
  training seeds on the post-design holdout. (a) Augmentation reduces but does
  not eliminate operator inconsistency; \IKNO{} and TFN are equivariant to
  numerical precision. (b) The sampled maximum of
  $\Delta_1(R)/e_1$ over 1,000 rotations remains nonzero after augmentation,
  whereas both exact architectures are stable up to numerical precision.}
  \label{fig:main}
\end{figure}

The 1,000-rotation sweep establishes that the residual operator error affects
prediction in particular frames. Figure~\ref{fig:main}b reports the sampled
maximum degradation as a percentage of original-frame error. The value is
$1.41\%$ for GNS and $2.45\%$ for component-EGNN. Augmentation reduces these
values to $0.27\%$ and $0.72\%$, but does not eliminate them. The corresponding
values are $4.3\times10^{-5}\%$ for \IKNO{} and $4.5\times10^{-5}\%$ for TFN.
Thus the distinction between approximate and exact operator consistency is
visible even when their average forecasting errors are close.

\begin{table}[t]
\floatconts
{tab:main}
{\caption{Post-design holdout means over three training seeds and five test
trajectories (lower is better); rotated $e_1/e_{10}$ use 1,000/20 fixed
rotations. The lower panel reports \IKNO{} relative to each baseline: positive
reductions and ratios above one favor \IKNO{}. Underlining marks its wins over
both augmented controls and, after rotation, every unconstrained control; the
single bold entry is the lowest $e_{\mathrm{eq}}$. Persistence has undefined
derivative-normalized equivariance error. Per-seed results are in
Appendix~\ref{app:perseed}.}}
{\scriptsize\setlength{\tabcolsep}{5.8pt}\begin{tabular}{lrrrrrr}
\toprule
Method & Params & $e_1$ & $e_{10}$ & Rot. $e_1$ & Rot. $e_{10}$ & $e_{\mathrm{eq}}$\\
\midrule
Persistence & 0 & .012219 & .120160 & .012219 & .120160 & ---\\
GNS & 63,174 & .011451 & .111186 & .011533 & .112025 & .163\\
GNS+aug. & 63,174 & .011516 & .111798 & .011507 & .111706 & .090\\
comp.-EGNN & 30,058 & .011418 & .110843 & .011584 & .112529 & .238\\
comp.-EGNN+aug. & 30,058 & .011505 & .111677 & .011526 & .111905 & .136\\
TFN & 27,297 & .011489 & .111487 & .011489 & .111487 & $2.59\!\times\!10^{-7}$\\
\IKNO{} (ours) & 4,869 & \underline{.011493} & \underline{.111488} & \underline{.011493} & \underline{.111488} & $\mathbf{5.60\!\times\!10^{-8}}$\\
\bottomrule
\end{tabular}

\vspace{5pt}
{\setlength{\tabcolsep}{3.1pt}\begin{tabular}{lrrrrrrr}
\toprule
Baseline & $\Delta e_1$ & $\Delta e_{10}$ & $\Delta$ Rot. $e_1$ & $\Delta$ Rot. $e_{10}$ & Param. ratio & Eq. ratio & Max-deg. ratio\\
\midrule
GNS & $-0.36\%$ & $-0.27\%$ & $+0.35\%$ & $+0.48\%$ & $12.97\!\times$ & $2.91\!\times\!10^6$ & $3.31\!\times\!10^4$\\
GNS+aug. & $+0.20\%$ & $+0.28\%$ & $+0.12\%$ & $+0.19\%$ & $12.97\!\times$ & $1.60\!\times\!10^6$ & $6.44\!\times\!10^3$\\
comp.-EGNN & $-0.65\%$ & $-0.58\%$ & $+0.79\%$ & $+0.92\%$ & $6.17\!\times$ & $4.25\!\times\!10^6$ & $5.73\!\times\!10^4$\\
comp.-EGNN+aug. & $+0.11\%$ & $+0.17\%$ & $+0.29\%$ & $+0.37\%$ & $6.17\!\times$ & $2.43\!\times\!10^6$ & $1.69\!\times\!10^4$\\
TFN & $-0.03\%$ & $-0.001\%$ & $-0.03\%$ & $-0.001\%$ & $5.61\!\times$ & $4.63\!\times$ & $1.06\!\times$\\
GNS-small & $+3.22\%$ & $+4.03\%$ & $+3.45\%$ & $+4.30\%$ & $1.10\!\times$ & $9.27\!\times\!10^6$ & $2.50\!\times\!10^4$\\
\bottomrule
\end{tabular}}
}
\end{table}

The lower panel makes the pairwise forecasting comparisons explicit rather
than retaining only global column winners. Component-EGNN is more accurate in
the original orientation, but \IKNO{} lowers its rotated errors at both
horizons. Against its augmented counterpart, \IKNO{} lowers all four
forecasting errors. These descriptive differences accompany reductions by
several orders of magnitude in operator inconsistency and sampled worst-frame
degradation.

Table~\ref{tab:main} answers a second question: does the specialized operator
sacrifice accuracy relative to a general equivariant network? TFN and \IKNO{}
differ by only $0.03\%$ at one step and less than $0.001\%$ at ten steps.
Exploratory two-sided paired $t$-tests give $p=0.706$ and $p=0.994$,
respectively. Their forecasting performance is therefore indistinguishable in
this experiment, while TFN uses $5.6$ times more parameters and spherical
harmonics. This comparison supports an efficiency claim, not an accuracy
advantage over TFN.

\subsection{Supporting result: controlling for parameter count}

The preceding comparison leaves open whether the large unconstrained controls
are disadvantaged because capacity permits orientation-specific fitting. We
isolate capacity with a six-layer, 5,352-parameter GNS. This parameter-matched
model obtains rotated errors of $0.011903$ at one step and $0.116500$ at ten
steps. \IKNO{} reduces these errors by $3.45\%$ and $4.30\%$, respectively;
exploratory paired tests across seeds give $p=0.014$ at both horizons. The corresponding
original-frame reductions are $3.22\%$ and $4.03\%$. The advantage therefore
persists when parameter count and message-passing depth are controlled.

\subsection{Supporting result: learning nontrivial dynamics}

Exact equivariance would be uninformative if obtained by predicting no
dynamics. Persistence provides this control because it is exactly equivariant
but copies the input field. \IKNO{} reduces persistence error by $5.95\%$ at
one step and $7.22\%$ at ten steps. Per-seed results in
Appendix~\ref{app:perseed} further show that the aggregate comparisons are not
driven by a single run.

\section{Discussion and Limitations}
\label{sec:discussion}

Low in-frame error measures fit to an observed orientation distribution,
whereas Equation~\ref{eq:eqmetric} tests whether the derivative is a
coordinate-consistent map. Augmentation improves both quantities but leaves
nonzero operator error and worst-frame degradation, consistent with the
distinction between sampled augmentation and exact constructions
\citep{brandstetter2022lie,puny2022frame}. \IKNO{} and TFN remove that
degradation by construction.

The scope of this conclusion is limited. First, the final test contains five
trajectories from one low-resolution Navier--Stokes generator. The evidence
does not establish superiority across PDEs, Reynolds numbers, geometries, or
boundary conditions. Second, the warm-up uses a fixed-axis forcing, whereas
forecast trajectories are unforced. This choice creates the orientation shift
needed for the controlled test, but it is not an isotropic sampling of initial
conditions. Third, arbitrary rotations change the coordinate description of
the periodic box; the test is a joint coordinate-frame transformation of
sampled states rather than a second solver run on an independently rotated
grid.

The baselines also delimit the claim. Component-EGNN is intentionally an
unconstrained adaptation for vector fields and should not be interpreted as a
failure of typed scalar--vector models such as PaiNN or SEGNN
\citep{schutt2021painn,brandstetter2022segnn}. The TFN supplies typed scalar and vector
features, but is restricted to degree-one spherical harmonics and one chosen
width; it does not exhaust the space of steerable architectures. Its matched
accuracy supports the parameter-efficiency of the specialized basis on this
task, not a general efficiency ordering. The two-frame extrapolator in
Appendix~\ref{app:details} is much more accurate because the trajectories are
temporally smooth; the present result is therefore specific to one-frame
operator identification, where the derivative must be inferred from spatial
structure.

\section{Conclusion}
\label{sec:conclusion}

\IKNO{} combines invariant scalar edge functions with an equivariant vector
basis to guarantee $\SO$ equivariance. On an untouched Navier--Stokes holdout,
it matches a tensor-field network with $5.6$ times fewer parameters, removes
coordinate dependence left by augmentation, and beats a parameter-matched
control. This supports compact, PDE-specialized exact operator consistency.

\bibliography{references}

\appendix

\section{Per-Seed Results}
\label{app:perseed}

Table~\ref{tab:perseed} reports the values underlying the aggregate main
result for the exact architectures, augmented controls, and parameter-matched
GNS. The same 1,000 one-step rotations and 20 ten-step rotations are used for
every method and seed.

\begin{table}[ht]
\floatconts
{tab:perseed}
{\caption{Per-seed post-design holdout results.}}
{\small\renewcommand{\arraystretch}{0.9}\begin{tabular}{llrrrrr}
\toprule
Method & Seed & $e_1$ & $e_{10}$ & Rot. $e_1$ & Rot. $e_{10}$ & $e_{\mathrm{eq}}$\\
\midrule
\IKNO{} & 0 & .011498 & .111562 & .011498 & .111562 & $5.65\times10^{-8}$\\
\IKNO{} & 1 & .011491 & .111461 & .011491 & .111461 & $5.52\times10^{-8}$\\
\IKNO{} & 2 & .011489 & .111442 & .011489 & .111442 & $5.64\times10^{-8}$\\
\midrule
TFN & 0 & .011480 & .111346 & .011480 & .111346 & $2.36\times10^{-7}$\\
TFN & 1 & .011499 & .111621 & .011499 & .111621 & $3.81\times10^{-7}$\\
TFN & 2 & .011489 & .111495 & .011489 & .111495 & $1.61\times10^{-7}$\\
\midrule
GNS + aug. & 0 & .011515 & .111767 & .011502 & .111657 & .074\\
GNS + aug. & 1 & .011532 & .111940 & .011510 & .111741 & .105\\
GNS + aug. & 2 & .011502 & .111686 & .011508 & .111719 & .091\\
\midrule
comp.-EGNN + aug. & 0 & .011501 & .111632 & .011541 & .112063 & .170\\
comp.-EGNN + aug. & 1 & .011516 & .111801 & .011520 & .111824 & .123\\
comp.-EGNN + aug. & 2 & .011498 & .111599 & .011518 & .111829 & .116\\
\midrule
GNS-small & 0 & .011904 & .116483 & .011902 & .116470 & .511\\
GNS-small & 1 & .011763 & .114880 & .011820 & .115504 & .476\\
GNS-small & 2 & .011958 & .117163 & .011988 & .117524 & .571\\
\bottomrule
\end{tabular}}
\end{table}

\FloatBarrier

\section{Implementation and Additional Controls}
\label{app:details}

All models follow the same optimization and selection protocol. Every training
sample contributes four randomly selected one-step temporal
windows per epoch. Models minimize mean-squared derivative error, clip gradient
norms at 10, and use weight decay $10^{-5}$. Checkpoint selection minimizes
validation one-step relative error. The test split is evaluated once after
freezing the architecture and selecting the checkpoint.

Two simple temporal controls characterize the predictability of the dataset.
Persistence receives one frame and obtains $e_1=0.012219$ and
$e_{10}=0.120160$. A two-frame linear extrapolator estimates the increment
from the first two observations and obtains $e_1=0.000259$ and
$e_{10}=0.014188$. The latter is not a like-for-like baseline because it
receives an additional frame that nearly reveals the time derivative. Its
performance shows that this dataset is temporally smooth and that the main
experiment specifically tests one-frame spatial operator learning.

Numerical equivariance can differ from the exact algebraic result because the
graph is constructed in floating-point arithmetic. The theoretical graph is
unchanged by rotation because pairwise distances are
preserved. In floating-point arithmetic, points exactly at the radius boundary
could change adjacency. No such instability is observed in the reported
graphs: \IKNO{} prediction error is unchanged to the displayed precision and
the residual operator error is on the order of single-precision rounding.

\end{document}